\documentclass{article}

\usepackage[nonatbib, main, preprint]{neurips_2026}
\usepackage[utf8]{inputenc} 
\usepackage[T1]{fontenc}    
\usepackage{hyperref}       
\usepackage{url}            
\usepackage{booktabs}       
\usepackage{amsfonts}       
\usepackage{nicefrac}       
\usepackage{microtype}      
\usepackage{xcolor}         
\usepackage{graphicx}
\usepackage{amsmath}
\newtheorem{thm}{Theorem}

\title{Diffusion Models, Denoiser Architecture and Creativity}

\author{%
  Itamar Levine, Yair Weiss\\
  The Hebrew University of Jerusalem\\
  \texttt{\{Itamar.Levine, Yair.Weiss\}@mail.huji.ac.il} \\
}

\begin{document}

\maketitle

\begin{abstract}
  The creativity of diffusion models refers to their ability to generate highly realistic images that are different from their training data. Creativity is somewhat surprising since  it is known that if the  denoiser used in the diffusion model  is the Bayes optimal denoiser for a given training set, then the model will simply copy the training samples. In this paper we present empirical and theoretical results that suggest that creativity in diffusion models is due to an interaction between the denoiser architecture and the target distribution. Theoretically, we give explicit forms for the distribution of generated samples as a function of the target distribution and the denoiser architecture for three different denoiser architectures (linear, polynomial, bottleneck).  Empirically, we show that small changes in the popular UNET denoiser architecture leads to very different forms of creativity, and these small changes often yield samples that are highly nonrealistic. Taken together, our results show that diffusion models will only be successful if the inductive bias of the denoiser architecture is in strong alignment with the true target distribution.
\end{abstract}

\section{Introduction}
The remarkable success of diffusion models\cite{conf/iclr/SongME21,10.5555/3495724.3496298,10.5555/3045118.3045358} lies in their "creativity"—their ability to generate highly realistic images that meaningfully deviate from their training data. This generative flexibility is somewhat surprising from a theoretical standpoint. It is known that if the denoiser used in a diffusion model acts as the Bayes optimal denoiser for a given dataset, the model simply copies the training samples, effectively collapsing the generated output into a uniform distribution over the training set.\cite{10.5555/3600270.3602196}

Recent literature has offered differing perspectives on how models escape this memorization to produce novel content. Some researchers, such as~\cite{kamb2025an}, posit that creativity arises primarily from the local properties of the model. Conversely, Lukoianov et al ~\cite{lukoianov2026locality} suggest that this generative diversity is fundamentally driven by the underlying data statistics. 

The research presented in this paper is based on our initial experiments that suggest that neither of these perspectives is sufficient to explain the success of diffusion models used in practice. Each row of figure~\ref{fig:unet} presents samples from a diffusion model trained on exactly the same dataset: a subset of 1000 images from CelebA. Each column shows images generated from exactly the same initial noise and each row used a slight variation on the commonly used unet architecture~\cite{10.1007/978-3-319-24574-4_28} as a denoiser.  

The unet architecture is a convolutional neural network that includes a contracting path (which consists of convolutions and pooling operations that reduce the spatial dimension of the image) and an expanding path that gradually increase the spatial dimension until it matches the original dimension. This architecture has a number of parameters including how many convolutional channels to use, how to handle the boundary conditions in the convolutions, and how many downsampling operations to perform before upsampling. We have found that each of these choices has a major effect on the generated samples.

The second row shows the samples when the unet parameters are set to their default values for generating facial images. Since we are only training on 1000 images, the results are far from the state-of-the-art but they exhibit the same creativity reported in previous papers~\cite{kadkhodaie2024generalization}), the images are rather realistic and are {\em not} identical to any training image. The top three row shows samples from the exact same architecture but when the number of pooling layers is varied (thus changing the size of the receptive field in the final output layer).  When the receptive field is too large (top), the samples are exact copies of the training images. When the receptive field is too small (third row), the samples appear highly distorted. 

The fourth row shows samples from the same architecture when we change the number of channels (now the samples lose their sharpness) and the bottom row shows samples when the convolution is performed with circular padding rather than zero padding. As discussed in~\cite{kamb2025an}, changing the padding makes the samples ignore the location information 

These simple experiments show that we cannot explain the creativity of diffusion models by appealing only to the dataset statistics or to the locality of the denoiser. First, all the samples are learned on exactly the same dataset and thus have access to the exact same statistics, and yet sometimes the samples are copies and sometimes they are not. Clearly, the architecture has a strong influence. Second, even when the denoiser is convolutional and local, the generated images may be copies of the training images (top row) or highly blurred (fourth row), indicating that the architecture's influence is not restricted to the size of the receptive field. 


\begin{figure}[h]
  \centering
  \includegraphics[width=0.9\textwidth]{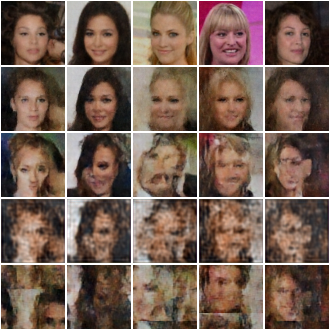}
  \caption{The impact of minor U-Net architectural variations on diffusion model outputs. All models were trained on the identical dataset (CelebA\cite{liu2015faceattributes}), and all images were generated using the exact same random seed to ensure a direct comparison. The results demonstrate that seemingly minor structural modifications lead to dramatically different generated distributions. From top to bottom, the rows display outputs from models utilizing: 4 pooling layers: Mostly memorizes and reproduces samples from the training set. 3 pooling layers: The standard, most commonly used configuration. 2 pooling layers. Reduced channel capacity (small channels). Circular padding. Crucially, despite possessing the exact same receptive field, the standard configuration (second row) and the bottom two configurations yield fundamentally different visual outcomes. This highlights that architectural choices inherently enforce different generated distributions, independent of the data statistics or receptive field size.}
  \label{fig:unet}
\end{figure}

Instead, we argue that creativity cannot be attributed to a single isolated factor. There is a deeply interconnected relationship between the empirical data statistics, the specific denoiser architecture, and the resulting generated distribution. Each architecture inherently enforces a distinct generated distribution, meaning diffusion models will only be successful if the inductive bias of the chosen denoiser strictly aligns with the true target distribution.

In this work, we explore this alignment through both theoretical and empirical lenses:

\begin{itemize}
    \item Theoretical Contributions (Section 2): We demonstrate theoretically how specific denoiser architectures derive distinct generated distributions. Furthermore, we prove that even minute modifications to the architecture can result in substantial, highly divergent shifts in the final distribution.
    \item Empirical Contributions (Section 3): We present several experiments supporting our theoretical claims, illustrating practically how varying the architecture—while holding the dataset constant—directly yields fundamentally different types of generated images.
\end{itemize}

\section{Analytic connection between architecture and sampled distribution}
\label{gen_inst}

Motivated by our initial experiments, we wish to answer the following question. Suppose we use training data sampled from a distribution $P_{target}$ to learn a denoiser with a particular architecture. We then use this denoiser within a diffusion model to generates samples from the distribution $P_{generated}$. What is the relationship between the architecture, $P_{generated}$ and $P_{sampled}$ ? 

All of the results are based on what we call the "score matching theorem":
\begin{thm}
Let $y=x+\eta$ where $\eta$ is Gaussian noise with variance $\sigma^2$. Let $S_\theta(y,\sigma)$ be a denoiser function. If: 
\[
\nabla \log P(y) = \frac{s_\theta(y,\sigma)-y}{\sigma^2}
\]
Then:
\begin{itemize}
\item $S_\theta(y,\sigma)$ is the minimizer of $E(\|S(y)-x\|^2)$
\item Sampling from a diffusion model with $S_\theta(y,\sigma)$ as the denoiser, yields $P_{generated}=P_{target}$
\end{itemize}
\end{thm}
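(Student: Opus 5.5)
The plan is to reduce both claims to two classical facts: Tweedie's formula for the first bullet, and the exactness of reverse-time diffusion when the true score is available for the second. Throughout, $P(y)$ denotes the density of the noisy observation $y=x+\eta$ with $x\sim P_{target}$ and $\eta\sim\mathcal{N}(0,\sigma^2 I)$, so that $P(y)=\int P_{target}(x)\,\mathcal{N}(y;x,\sigma^2 I)\,dx$. We interpret $s_\theta$ and $S_\theta$ as the same function.

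\textbf{First bullet.} The minimizer of $E(\|S(y)-x\|^2)$ over all measurable $S$ is the conditional mean $E[x\mid y]$. To see this, condition on $y$ and expand the square; the cross term vanishes, leaving $E\|S(y)-E[x|y]\|^2$ plus a constant independent of $S$.

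Next we compute $\nabla_y P(y)$ by differentiating under the integral. Using $\nabla_y \mathcal{N}(y;x,\sigma^2 I)=\frac{x-y}{\sigma^2}\mathcal{N}(y;x,\sigma^2 I)$, we get
\[
\nabla_y P(y)=\frac{1}{\sigma^2}\int (x-y)P_{target}(x)\mathcal{N}(y;x,\sigma^2 I)\,dx .
\]
Dividing by $P(y)$ and applying Bayes' rule gives Tweedie's formula, $\nabla\log P(y)=\frac{E[x|y]-y}{\sigma^2}$. Comparing with the hypothesis yields $S_\theta(y,\sigma)=E[x|y]$ for every $y$, since $P(y)>0$ everywhere because of the Gaussian convolution. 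So $S_\theta$ is the minimizer, and it is unique up to null sets. The only technical point is justifying differentiation under the integral. This holds under mild moment conditions on $P_{target}$, for example a finite first moment, because the Gaussian derivative is dominated.

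\textbf{Second bullet.} Here we assume the hypothesis holds for every noise level $\sigma=\sigma(t)$ along the forward process $dx=g(t)\,dw$ (variance-exploding form), with $P_t$ the law of $x_0+\sigma(t)\eta$. Under this assumption, the quantity $(s_\theta(y,\sigma(t))-y)/\sigma(t)^2$ equals the exact score $\nabla\log P_t$.

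We then invoke Anderson's time-reversal theorem, as used by Song et al. The reverse SDE $dx=-g(t)^2\nabla\log P_t(x)\,dt+g(t)\,d\bar w$, started from $P_T$, has marginals $P_t$ for all $t$, and in particular $P_0=P_{target}$. Equivalently, one can check through the Fokker--Planck equation that the probability-flow ODE $dx=-\tfrac12 g^2\nabla\log P_t\,dt$ transports $P_t$ correctly. Since the score is computed exactly, the generated law equals $P_{target}$.

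\textbf{Main obstacle.} The difficulty lies in the second bullet. Exact equality requires the idealized setting: continuous time with no discretization error, initialization from exactly $P_T$ rather than a pure Gaussian, and enough regularity for the reverse SDE to be well posed and for Fokker--Planck uniqueness to hold. Both $P_t$ (which is smooth and positive for $t>0$) and the Lipschitz properties of the score away from $t=0$ need to be checked. The plan is to state these idealizations explicitly and otherwise cite the standard reversal result rather than reprove it.
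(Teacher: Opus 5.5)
Your proposal is correct and follows the same route as the paper, whose proof simply defers to the standard results. Those results are Tweedie's formula, which identifies the denoiser in the hypothesis with the MMSE estimator $E[x\mid y]$, and exact reverse-time sampling once the true score is known at every noise level. Your write-up is more careful than the paper's one-line citation: it makes explicit that the hypothesis must hold for every $\sigma$ along the schedule, and that $P_{generated}=P_{target}$ holds only for the idealized continuous-time sampler initialized at $P_T$. As a minor refinement, differentiating under the integral needs no moment assumption on $P_{target}$, because $\frac{x-y}{\sigma^2}\mathcal{N}(y;x,\sigma^2 I)$ is bounded uniformly in $x$ and $y$.
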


{\bf Proof:} This is just a restatement of the results of~\cite{10.5555/3045118.3045358,conf/iclr/SongME21} using our formulation.

The score matching theorem implies that if we were to train a neural network denoiser to minimize $E(\|S(y)-x\|^2$, the network will converge to a denoiser that matches the score and sampling from the diffusion model will give us samples from $P_{target}$. But this implication is only correct if the architecture of the neural network matches the form of the score function. What happens when the architecture is restricted? As we now show, the
 the two distributions $P_{\theta},P_{generated}$ may be very different. 

One such case was already presented in~\cite{kamb2025an}: when the denoiser $S(y)$ is constrained to be local, it cannot match the score exactly, and this leads to the following theorem:

\begin{thm}
    For a convolutional architecture with receptive field size $S$, $P_{generated}$ only depends on $P_{target}(Mx)$ 
where $Mx$ is a patch of size $S$ extracted from $x$.
\end{thm}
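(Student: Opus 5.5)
Throughout, the denoiser is the \emph{optimal} convolutional denoiser: the minimizer of $E(\|S(y)-x\|^2)$ over all functions whose $i$-th output pixel depends only on the patch $M_i y$ of size $S$ around pixel $i$. Under this reading the theorem reduces to showing that this constrained minimizer, at every noise level $\sigma$, is a functional of the patch marginals $P_{target}(M_i x)$ alone. The diffusion sampler is a deterministic map (or, for a stochastic sampler, a Markov kernel) built only from the sequence of denoisers $S_\theta(\cdot,\sigma_t)$ and injected Gaussian noise. So if every denoiser in the sequence depends on $P_{target}$ only through the patch marginals, so does the law of the final sample, which is $P_{generated}$. The proof therefore has two steps: characterize the constrained minimizer, then push it through the sampler.

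\textbf{Step 1 (locally constrained minimizer).} The loss separates across output pixels:
\[
E\|S(y)-x\|^2=\sum_i E\big(S_i(M_i y)-x_i\big)^2 .
\]
Each term involves a different free function $S_i$ (or, with translation equivariance, a shared function; I handle that below). Minimizing pixelwise gives $S_i(M_i y)=E[x_i\mid M_i y]$, the same conditional-expectation argument as in the first bullet of the score matching theorem, applied to a patch. The joint law of $(x_i, M_i y)$ is determined by the law of $M_i x$: $x_i$ is a coordinate of $M_i x$, and $M_i y=M_i x+M_i\eta$ with $M_i\eta$ independent Gaussian noise of variance $\sigma^2$. Writing the posterior explicitly,
\[
E[x_i\mid M_iy]=\frac{\int z_c\,P_{target}(M_ix=z)\,\mathcal N(M_iy;z,\sigma^2 I)\,dz}{\int P_{target}(M_ix=z)\,\mathcal N(M_iy;z,\sigma^2 I)\,dz},
\]
where $z_c$ is the centre coordinate of $z$. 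This depends on $P_{target}$ only through the patch marginal.

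In the translation-equivariant (weight-sharing) case, a single function $f$ is used at all locations. Minimizing $\sum_i E(f(M_iy)-x_i)^2$ then gives $f=E[x_c\mid \text{patch}]$ under the mixture $\frac1N\sum_i \mathrm{Law}(M_ix)$, which is again a function of patch marginals only. For a finite training set, $P_{target}$ is the empirical distribution and the statement becomes that the denoiser depends only on the empirical patch distribution. This matches the analysis of~\cite{kamb2025an}.

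\textbf{Step 2 (propagation through the sampler).} Each reverse step has the form $y_{t-1}=a_t y_t+b_t S_\theta(y_t,\sigma_t)+c_t\xi_t$ with fixed schedule coefficients. By induction on $t$, the law of $y_{t-1}$ is the pushforward of the law of $y_t$ (initialized as a fixed Gaussian) under a kernel that Step 1 shows is determined by the patch marginals. Hence two targets with identical patch marginals produce identical $P_{generated}$, which is precisely the claim.

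\textbf{Main obstacle.} The delicate point is Step 1's assumption that the network class is rich enough to attain the conditional expectation, and that training reaches the minimizer. I would state this as an explicit hypothesis: the architecture realizes all measurable functions of $M_iy$. I would also need to pin down how boundary handling (zero versus circular padding) changes which patches $M_i$ occur at the borders, since this determines which marginals enter. Measurability and integrability of the posterior mean are routine under finite second moments.
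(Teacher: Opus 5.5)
Your argument is correct and follows essentially the route the paper relies on. The paper gives no proof of its own and simply defers to Kamb and Ganguli, whose derivation is your Step~1 (the MMSE denoiser constrained to be local, and optionally translation-equivariant, is a patch-level posterior mean and hence a functional of the patch marginals) together with your Step~2 observation that the sampler is built only from these denoisers.

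One small strengthening: the richness hypothesis you flag as the main obstacle is needed only for the explicit posterior-mean formula. Each term $E(S_i(M_iy)-x_i)^2$ of the loss already depends on $P_{target}$ only through the law of $M_ix$, so the set of minimizers over any class of local architectures is determined by the patch marginals.
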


 We now examine three distinct architectural classes—linear, polynomial, and bottleneck networks—and derive explicit forms demonstrating how each shapes the generated output. To empirically validate our theoretical conclusions, we trained models on 2-D data for the majority of the architectures examined.

\subsection{Linear Denoiser}
Linear denoising serves as an ideal analytical starting point because it allows for a closed-form expression of the generated distribution, $P_{generated}$. Although it is a relatively simple model, previous research, such as Wang and Vastola~\cite{wang2023hiddenlinearstructurescorebased}, has demonstrated that in the high-noise regime, the behavior of most diffusion models closely approximates that of a linear denoiser.
\begin{thm}
    When training a linear denoiser, $P_{generated} = \mathcal{N}(\mu, \Sigma)$ where $\mu=E[P{_{target}}]$ and $\Sigma=Cov(P_{target})$.
\end{thm}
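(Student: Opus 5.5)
The plan is to compute the MSE-optimal linear (affine) denoiser at each noise level $\sigma$ in closed form. We then observe that this denoiser is exactly the Bayes optimal denoiser for a Gaussian target $\mathcal{N}(\mu,\Sigma)$, and conclude from the score matching theorem (applied to that Gaussian) that sampling produces $\mathcal{N}(\mu,\Sigma)$.

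\textbf{Step 1: the optimal affine denoiser.} For each $\sigma$, write the denoiser as $S(y,\sigma)=A_\sigma y+b_\sigma$ and minimize $E\|A_\sigma y+b_\sigma-x\|^2$, where $y=x+\eta$, $x\sim P_{target}$ and $\eta\sim\mathcal{N}(0,\sigma^2 I)$ is independent of $x$. The problem is a linear least-squares regression, and its minimizer depends only on the first two moments of $(x,y)$. These moments are $E[y]=\mu$, $Cov(y)=\Sigma+\sigma^2 I$ and $Cov(x,y)=\Sigma$. The normal equations therefore give
\[
A_\sigma=\Sigma(\Sigma+\sigma^2 I)^{-1}, \qquad b_\sigma=\mu-A_\sigma\mu .
\]
Note that $\Sigma+\sigma^2 I$ is invertible for every $\sigma>0$.

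\textbf{Step 2: identify it as a Gaussian posterior mean.} If $x\sim\mathcal{N}(\mu,\Sigma)$, then $(x,y)$ is jointly Gaussian. The posterior mean $E[x\mid y]$ is then exactly $\mu+\Sigma(\Sigma+\sigma^2 I)^{-1}(y-\mu)$, which coincides with the affine denoiser of Step 1. Equivalently, we can verify the hypothesis of the score matching theorem directly. The noisy marginal is $Q_\sigma(y)=\mathcal{N}(\mu,\Sigma+\sigma^2 I)$, and a short algebraic check gives
\[
\frac{S(y,\sigma)-y}{\sigma^2}=-(\Sigma+\sigma^2 I)^{-1}(y-\mu)=\nabla\log Q_\sigma(y).
\]
The check uses the identity $\Sigma(\Sigma+\sigma^2I)^{-1}-I=-\sigma^2(\Sigma+\sigma^2I)^{-1}$.

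\textbf{Step 3: apply the score matching theorem.} By Step 2, the trained linear denoiser satisfies the hypothesis of the score matching theorem with the Gaussian $\mathcal{N}(\mu,\Sigma)$ playing the role of the target, at every noise level $\sigma$. The theorem then yields $P_{generated}=\mathcal{N}(\mu,\Sigma)$.

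The main subtlety is Step 3. The score matching theorem must be applied to a surrogate distribution rather than to $P_{target}$ itself. This is legitimate because the sampler only ever uses the denoiser, and that denoiser coincides at every $\sigma$ with the exact score of the noised surrogate family $\mathcal{N}(\mu,\Sigma+\sigma^2 I)$, which is a consistent diffusion of $\mathcal{N}(\mu,\Sigma)$. One further point needs stating: the denoiser is trained as an affine map, with a bias term and possibly with $A,b$ depending on $\sigma$. If it were purely linear with no bias, the same argument goes through with $\mu$ replaced by $0$ and $\Sigma$ replaced by the second-moment matrix $E[xx^T]$. I would therefore state the theorem for the affine parametrization, which yields the claimed $\mu$ and $\Sigma$.
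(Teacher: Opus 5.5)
Your proposal is correct and follows the same route as the paper. The paper cites the classic fact that the MSE-optimal linear estimator coincides with the Gaussian MMSE estimator, so the trained linear denoiser matches the score of $\mathcal{N}(\mu,\Sigma)$ at every noise level, and then invokes the score matching theorem; your Steps 1--3 do exactly this, with the normal equations and the score identity written out, and your point that a bias term is needed to get $\mu$ and $\Sigma$ rather than $0$ and $E[xx^T]$ is a fair refinement the paper leaves implicit.
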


{\bf Proof:} This follows from the classic result~\cite{Kay97} that learning a linear denoiser is mathematically equivalent to an optimal MMSE denoiser while assuming that the distribution is Gaussian. Thus the linear denoiser will match the score of a Gaussian distribution, and by the score matching theorem, $P_{generated}$ will be a Gaussian with the appropriate mean and covariance.

This theoretical framework leads directly to a crucial observation: if two distinct target distributions share the exact same mean and covariance, a linear denoiser will produce an identical $P_{generated}$
  for both. To empirically validate this observation, we constructed two distinct 2-dimensional datasets—a 3-component Gaussian Mixture Model (3GMM) and a 2-component Gaussian Mixture Model (2GMM)—that were explicitly designed to have matching means and covariances, and trained a linear denoiser on each.

\begin{figure}[h]
  \centering
  \includegraphics[width=\textwidth]{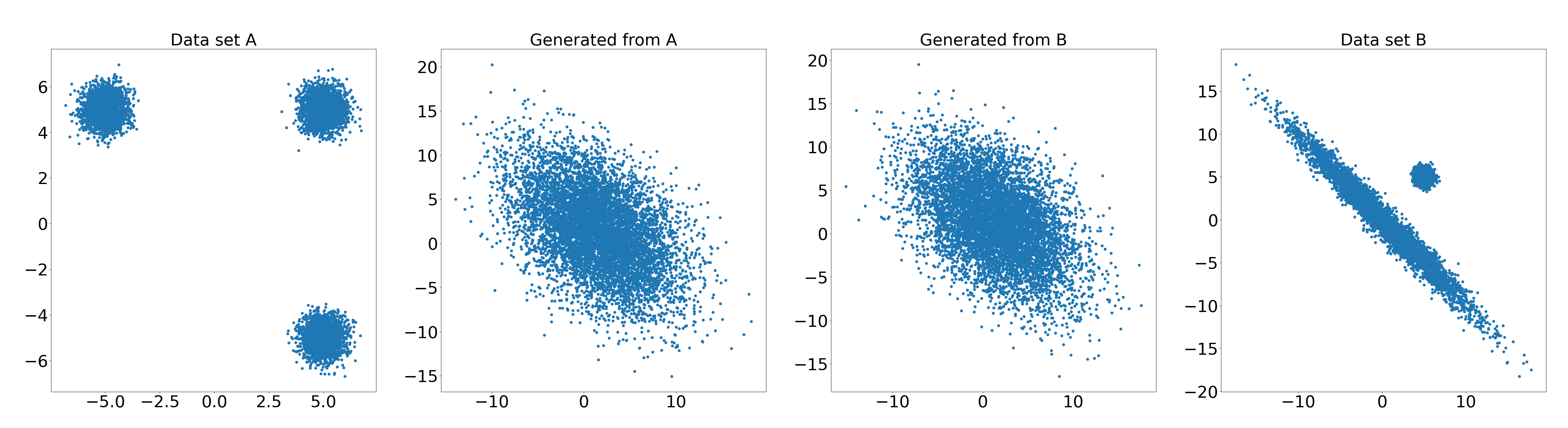}
  \caption{Two different data sets with the same mean and covariance (most left and right) and data generated using diffusion process learn by linear denoiser.}
\end{figure}

We now consider a linear denoiser with a bottleneck. The linear denoiser predicts the clean signal $x$ from the noisy signal $y$ by multiplying $y$ by a linear matrix $W$. But unlike the previous case is a product of two rectangular matrices, each of which has rank $K$: $W_{n \times n} = A _{n \times K} B_{K \times n}$ where $K$ is much smaller than $n$ (the dimension of the input). 

\begin{thm}
    when training a linear denoiser with bottle neck of size K, $P_{generated} = \mathcal{N}(\mu, \Sigma)$ where $\mu=E[P{_{target}}]$ and $\Sigma$ is the rank K approximation of $Cov(P_{target})$.
\end{thm}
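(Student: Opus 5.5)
The plan is to reduce the bottleneck case to the unconstrained linear case: show that at every noise level the optimal rank-$K$ denoiser coincides with the unconstrained optimal linear denoiser for a Gaussian whose covariance is the rank-$K$ approximation of $\Sigma = Cov(P_{target})$, and then invoke the score matching theorem exactly as in the proof of the linear-denoiser theorem.

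\textbf{Centering.} I first reduce to zero mean. I assume the denoiser is affine, $S(y)=ABy+b$; for fixed $W=AB$ the optimal bias is $b=(I-W)\mu$. Substituting $x\to x-\mu$ and $y\to y-\mu$ then leaves a purely linear problem in centered variables. From now on I take $\mu=0$ and write
\[
\Sigma = U\Lambda U^T,\qquad \lambda_1\ge\dots\ge\lambda_n .
\]

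\textbf{Per-noise-level problem.} Fix $\sigma$. Since $y=x+\eta$ with $\eta$ independent of $x$, we have $E[yy^T]=\Sigma+\sigma^2 I$ and $E[xy^T]=\Sigma$. The training objective is
\[
E\|Wy-x\|^2=\operatorname{tr}\!\left(W(\Sigma+\sigma^2 I)W^T\right)-2\operatorname{tr}(W\Sigma)+\operatorname{tr}\Sigma,
\]
minimized over $\operatorname{rank}(W)\le K$.

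This is a reduced-rank regression problem. Its classical solution is $W=P W^*$, where:
\begin{itemize}
\item $W^*=\Sigma(\Sigma+\sigma^2 I)^{-1}$ is the unconstrained optimum (the Wiener filter);
\item $P$ is the orthogonal projector onto the top-$K$ eigenvectors of $W^*(\Sigma+\sigma^2 I)W^{*T}=\Sigma^2(\Sigma+\sigma^2 I)^{-1}$.
\end{itemize}
I would prove this directly. Write $C=\Sigma+\sigma^2 I$. Completing the square gives
\[
E\|Wy-x\|^2=\|(W-W^*)C^{1/2}\|_F^2+\text{const}.
\]
So we need the best rank-$K$ approximation of $W^*C^{1/2}=\Sigma(\Sigma+\sigma^2 I)^{-1/2}$, which is given by Eckart--Young.

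The key observation is that every matrix involved is diagonal in the basis $U$. The eigenvalues of $\Sigma^2(\Sigma+\sigma^2 I)^{-1}$ are $\lambda_i^2/(\lambda_i+\sigma^2)$, and this is increasing in $\lambda_i\ge 0$. Hence the top-$K$ subspace is spanned by the top-$K$ principal components $U_K$ of $\Sigma$, for every $\sigma$ simultaneously. The optimal bottleneck denoiser is therefore
\[
W_\sigma=U_K\,\mathrm{diag}\!\left(\frac{\lambda_i}{\lambda_i+\sigma^2}\right)_{i\le K}U_K^T=\Sigma_K(\Sigma_K+\sigma^2 I)^{-1},
\]
where $\Sigma_K=U_K\Lambda_K U_K^T$ is the rank-$K$ approximation of $\Sigma$. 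This factors as $AB$ with inner dimension $K$, so it is realizable by the architecture.

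\textbf{Identifying the score.} I then compute
\[
\frac{W_\sigma y-y}{\sigma^2}=-(\Sigma_K+\sigma^2 I)^{-1}y ,
\]
using $\Sigma_K(\Sigma_K+\sigma^2 I)^{-1}-I=-\sigma^2(\Sigma_K+\sigma^2 I)^{-1}$. This is exactly $\nabla\log$ of the density of $\mathcal N(0,\Sigma_K)$ convolved with $\mathcal N(0,\sigma^2 I)$. It holds at every noise level, so the hypothesis of the score matching theorem is satisfied for the target $\mathcal N(\mu,\Sigma_K)$. The theorem then gives $P_{generated}=\mathcal N(\mu,\Sigma_K)$.

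\textbf{Main obstacle.} The step I expect to need the most care is the reduced-rank regression step. One must justify that training reaches the global minimizer of a nonconvex factored objective; I would simply assume this, as the paper does for the linear case, noting that linear networks with a bottleneck have no spurious local minima. Two degenerate situations also need comment:
\begin{itemize}
\item Ties $\lambda_K=\lambda_{K+1}$ make the minimizer non-unique. The statement should then be read as holding for \emph{a} rank-$K$ approximation.
\item $\Sigma_K$ is singular, so the generated Gaussian is degenerate, supported on $\mu+\operatorname{span}(U_K)$. The score matching theorem must be applied to the noised densities, which are nondegenerate for $\sigma>0$.
\end{itemize}
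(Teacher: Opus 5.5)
Your proposal is correct and follows essentially the same route as the paper's appendix proof. Both complete the square to reduce the problem to $\min_{\operatorname{rank}(L)\le K}\|\Sigma(\Sigma+\sigma^2 I)^{-1/2}-L(\Sigma+\sigma^2 I)^{1/2}\|_F^2$, apply Eckart--Young in the shared eigenbasis, and obtain $L^*=\sum_{i\le K}\frac{\lambda_i}{\lambda_i+\sigma^2}u_iu_i^T$. Your additions make explicit several steps the paper leaves implicit, since it only remarks that $L^*$ ``is the k-approximation of the data's covariance'':
\begin{itemize}
\item the monotonicity of $\lambda\mapsto\lambda^2/(\lambda+\sigma^2)$, which justifies truncating at the top-$K$ eigenvalues of $\Sigma$;
\item the bias term, which is what actually produces the mean $\mu$;
\item the check that $L^*=\Sigma_K(\Sigma_K+\sigma^2 I)^{-1}$ gives the score of $\mathcal{N}(\mu,\Sigma_K+\sigma^2 I)$, so the score matching theorem applies.
\end{itemize}
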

\label{thm4}

{\bf Proof:}
The proof for this  theorem relies on the well-known connection between linear autoencoders and principal component analysis (e.g.~\cite{books/daglib/0033642}). In the \hyperref[proof4]{appendix}, we extend this connection for denoising, and show that the rank constrained linear denoiser will converge to the optimal linear denoiser assuming that the data was generated by a Gaussian whose covariance is the rank K approximation to the covariance of the data.

\subsection{Polynomial denoiser}

We define a polynomial architecture as a sequence consisting of a linear layer, followed by an entry-wise power-of-k activation function, and a final linear layer. By assuming the linear layers have infinite width, this architecture can represent any polynomial of degree k. Theoretically, as the degree k increases, the model can approximate any continuous function with arbitrary precision.

\begin{thm}
  when training a polynomial denoiser of degree $d$ $P_{generated}$ is a log-polynomial distribution with degree $d+1$, and $P_{generated}$ only depends on the first $2d$ moments of $P_{target}$.  
\end{thm}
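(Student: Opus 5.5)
The proof follows the template used for the linear denoiser: identify which function the trained network converges to, then invoke the score matching theorem. Fix a noise level $\sigma$. Under the infinite-width assumption, the polynomial architecture (linear layer, entry-wise power-$d$ activation, linear layer) can represent every vector-valued polynomial $S(y)=\sum_{|\alpha|\le d} c_\alpha y^\alpha$ of degree at most $d$. This holds because powers of linear forms $(w^\top y + b)^d$ span all polynomials of degree $\le d$. Training therefore reduces to minimizing $E\|S(y)-x\|^2$ over the finite-dimensional linear space $\mathcal{P}_d$ of such polynomials. This is an ordinary least-squares problem, and its minimizer $S^*_\sigma$ is the $L^2$ projection of $E[x\mid y]$ onto $\mathcal{P}_d$.

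\textbf{Step 1: moment dependence.} The normal equations read
\[
E[y^\beta y^\alpha]\,c_\alpha = E[y^\beta x] \quad \text{for all } |\alpha|,|\beta|\le d .
\]
With $y=x+\eta$ and $\eta$ Gaussian independent of $x$, the Gram entries $E[y^{\alpha+\beta}]$ involve monomials of total degree up to $2d$. Expanding by the binomial theorem, they are polynomial combinations of moments of $x$ of order $\le 2d$ and known Gaussian moments. The right-hand sides $E[y^\beta x]$ have degree $\le d+1\le 2d$ for $d\ge 1$. Hence $S^*_\sigma$ depends on $P_{target}$ only through its first $2d$ moments. The one caveat is invertibility of the Gram matrix, which holds because $y$ has a density for $\sigma>0$, so no nonzero polynomial vanishes a.s.

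\textbf{Step 2: log-polynomial form.} Define the candidate score $g_\sigma(y)=(S^*_\sigma(y)-y)/\sigma^2$, which is a polynomial of degree $d$. If $g_\sigma$ is a gradient field, then $g_\sigma=\nabla Q_\sigma$ with $Q_\sigma$ a polynomial of degree $d+1$. The score matching theorem then identifies the noisy marginal as $P_\sigma(y)\propto \exp(Q_\sigma(y))$, and letting $\sigma\to 0$ gives $P_{generated}$ as a log-polynomial of degree $d+1$. Because every $Q_\sigma$ is built from $S^*_\sigma$, it inherits the dependence on only the first $2d$ moments.

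\textbf{Step 3: the main obstacle.} Two things are not automatic. First, the $L^2$ projection of a gradient field onto $\mathcal{P}_d$ need not be curl-free. Second, the family $\exp(Q_\sigma)$ need not be consistent with Gaussian smoothing across $\sigma$, so the score matching theorem does not apply verbatim. For the first point, I would try to show that the projection commutes with taking gradients when the Gaussian-weighted inner product is used. For the second, I would treat the statement as holding at the final (small-$\sigma$) stage of sampling, since that stage determines $P_{generated}$. I would also need $Q_{\sigma}$ to have a negative-definite leading form for normalizability. Failing a clean argument, I would state the result under the paper's standing idealization, as in the proof for the linear denoiser, that the trained denoiser is exactly the score of some distribution in the model class.
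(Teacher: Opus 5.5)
Your proposal follows the paper's proof. Step~1 is the appendix's normal-equation argument: the paper writes the optimum as $A=\Sigma_{x,h}\Sigma_h^{-1}$, $b=\mu_x-A\mu_h$ and notes that every entry is a linear combination of moments of order at most $2d$. Step~2 is the paper's ``integrate the degree-$d$ score'' argument. The obstacles you flag in Step~3 are real, and the paper's proof simply assumes them away by invoking the score matching theorem, so your fallback to that idealization puts you on exactly the paper's footing. Two remarks: projecting onto $\mathcal{P}_d$ commutes with $\nabla$ only when the weight is Gaussian, not for the actual weight $p_\sigma$; and already for $d=2$ the optimal polynomial denoiser can have nonzero curl. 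Also, the moment-dependence half needs no idealization at all, since the sampler only ever evaluates $S^*_\sigma$.
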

\label{thm5}

{\bf Proof:}
The first part of this claim is based on the score matching theorem. When the score function is represented by a polynomial of degree $d$,  integrating the score function implies that $\log P(y)$ must be a polynomial of degree $d+1$ for any $\sigma$. In particular, as $\sigma \rightarrow 0$ this implies that $\log P(x)$ will also be a polynomial of the same degree.  The second part of the claim, regarding the dependence on the first $2d$ moments, is proven in the \hyperref[proof5]{appendix}.

To validate this, we trained polynomial denoisers of various degrees on the same dataset, yielding distinct generated distributions as shown in Figure 3. We note an important theoretical constraint: a log-probability distribution defined by a polynomial of odd degree is not integrable, as the integral over the entire domain would be infinite. Consequently, to ensure a valid probability density, we restricted our experiments to polynomial denoisers of odd degree $d$ resulting in log-distributions of even degree $d+1$.

In this figure, $P_{target}$ is a mixture of three Gaussians while $P_{generated}$ is very different. For a denoiser that is a polynomial of degree 1, the generated data is sampled from a Gaussian (this is of course equivalent to the linear denoiser).When the denoiser is a polynomial of degree $3$, the log density of the generated data is a polynomial of degree $4$ so it can represent bimodal data but still does not capture the true distribution,  

\begin{figure}[h]
  \centering
  \includegraphics[width=\textwidth]{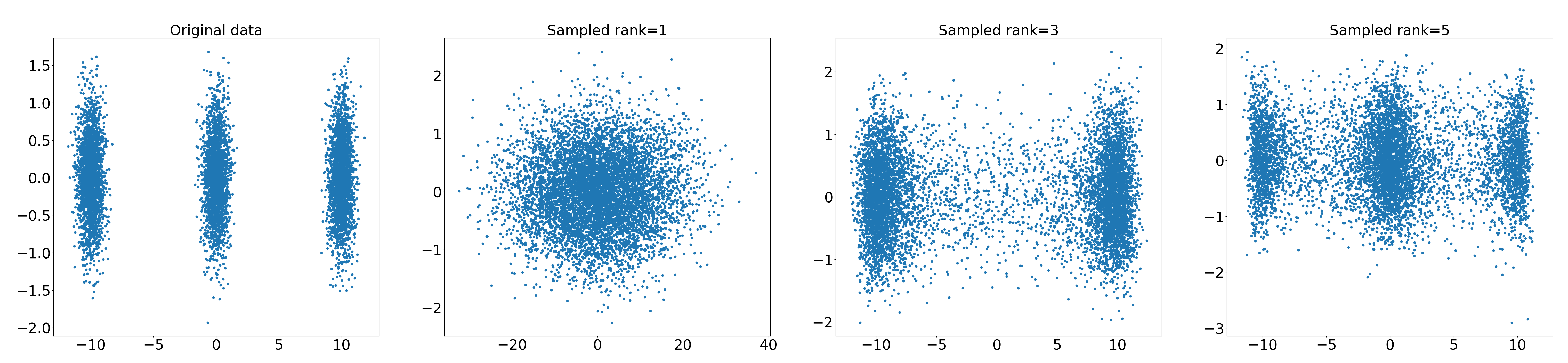}
  \caption{A dataset sampled from a mixture of three Gaussians, and the data sampled from $P_{generated}$ where the difussion process learn by a polynomial denoiser with different degrees.}
\end{figure}

\subsection{Bottleneck denoiser}
We define a bottleneck architecture as a neural network that consists of a deep and wide encoder that maps the input to an $h$ dimensional vector and then a deep and wide decoder.It is worth noting that the majority of neural networks currently employed for denoising naturally fall under our formal definition of a bottleneck architecture. Across these models, the primary differentiating structural factor is the dimensionality of the bottleneck itself, which we denote as $h$. The fundamental principle underlying this classification is that neural networks act as continuous and differentiable functions. When such a mapping passes through a restricted bottleneck of size $h$, the resulting output inherently satisfies the mathematical requirements of a $h$-rank manifold.

\begin{thm}
    When training a bottle neck architecture samples of $P_{generated}$ lie on an $h$ dimensional manifold. If $P_{target}$ does not lie on an $h$ dimensional manifold, then $P_{target} \neq P_{generated}$.
\end{thm}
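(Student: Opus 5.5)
The plan is to track where the final sample lands rather than which distribution the score represents. Write the denoiser as $S_\theta(y,\sigma) = D(E(y,\sigma),\sigma)$. Here $E:\mathbb{R}^n\to\mathbb{R}^h$ is the encoder and $D:\mathbb{R}^h\to\mathbb{R}^n$ is the decoder, both continuous and, being neural networks with smooth or piecewise-smooth activations, locally Lipschitz. The image of $D(\cdot,\sigma)$ is then contained in $\mathcal{M}_\sigma := D(\mathbb{R}^h,\sigma)$. This set is the image of $\mathbb{R}^h$ under a locally Lipschitz map, so it is a countable union of Lipschitz images of compact sets. It therefore has Hausdorff dimension at most $h$, and we call it an ($h$-rank) $h$-dimensional manifold in the sense used in the text.

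The first step is to show that the output of the sampler lies in $\mathcal{M}_{\sigma_{\min}}$. The most robust way is to use the standard final step of diffusion samplers. The update at the last noise level (DDIM/Tweedie denoising, or the common choice of returning the denoiser output at $\sigma_{\min}$) has the form $x_{\text{out}} = S_\theta(y_{\sigma_{\min}},\sigma_{\min})$, so $x_{\text{out}}\in\mathcal{M}_{\sigma_{\min}}$ exactly. For a sampler that instead takes a step $y' = y + (\sigma^2-\sigma'^2)\,\frac{S_\theta(y,\sigma)-y}{\sigma^2} = \alpha y + (1-\alpha)S_\theta(y,\sigma)$, I would argue in the limit $\sigma_{\min}\to 0$. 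In this limit $\alpha\to 0$ for the last step (for DDIM, $\alpha=\sigma'/\sigma$ with $\sigma'=0$), so the sample equals the denoiser output and again lies in $\mathcal{M}$. Thus $\operatorname{supp}P_{generated}\subseteq\overline{\mathcal{M}}$, which proves the first claim.

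For the second claim, I would use a measure-theoretic comparison. A set of Hausdorff dimension at most $h$ has $\mathcal{H}^{d}$-measure zero for every $d>h$. Suppose $P_{target}$ does not lie on an $h$-dimensional manifold, meaning $P_{target}(\mathcal{N})<1$ for every set $\mathcal{N}$ of dimension at most $h$. Take $\mathcal{N}=\overline{\mathcal{M}}$. Then $P_{generated}(\overline{\mathcal{M}})=1$ while $P_{target}(\overline{\mathcal{M}})<1$, so the two distributions differ. A typical example is a $P_{target}$ with a density on $\mathbb{R}^n$ and $n>h$, which gives zero mass to $\overline{\mathcal{M}}$. This does not contradict the score matching theorem: a bottleneck network simply cannot realize the score of such a $P_{target}$ at small $\sigma$.

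The main obstacle is the first step. For a stochastic or non-denoiser-terminated sampler, intermediate iterates are not on $\mathcal{M}$. The residual term $\alpha y$ must be shown to vanish, and the closure must be handled: the union over small $\sigma$ of $\mathcal{M}_\sigma$ could in principle be higher-dimensional. I would resolve this by fixing the sampler convention that the final output is $S_\theta(\cdot,\sigma_{\min})$, as in standard implementations. Alternatively, I would assume $D$ is continuous in $\sigma$ at $0$, so that limits lie in $\overline{\mathcal{M}_0}$. A secondary subtlety is the word ``manifold''. Lipschitz images can have singularities, so I would state the result as ``contained in a set of dimension at most $h$'', which is all the second claim needs.
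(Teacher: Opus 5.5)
Your argument is correct and is essentially the paper's own one-line proof: for every $\sigma$ the denoiser's output lies in the image of an $h$-dimensional code, and at the last step ($\sigma\to 0$) the sample is such an output. You make this rigorous with Hausdorff dimension and an explicit sampler convention, which also justifies your weakening of ``manifold'' to ``set of dimension at most $h$''. One small repair is needed: apply your hypothesis to $\mathcal{M}_{\sigma_{\min}}$ itself, which already carries $P_{generated}$-mass one, rather than to $\overline{\mathcal{M}}$, because the closure of a Lipschitz image of $\mathbb{R}^h$ need not have dimension at most $h$ (e.g.\ a dense irrational winding curve on a torus).
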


{\bf Proof:} This follows from the simple fact that for any $\sigma$, the output of the denoiser must fall on an $h$ dimensional manifold. In particular this also holds for $\sigma \rightarrow 0$

To empirically investigate this property, we designed and trained a 2-D bottleneck architecture comprising four linear layers. The network first projects the 2-dimensional input into a relatively high-dimensional hidden space, compresses it through the bottleneck dimension h, expands it back to the high-dimensional space, and finally projects it back down to the original input dimension.

Specifically, we evaluated A model with a bottleneck dimension of h=1, utilizing a high-dimensional expansion of 64.

As presented in Figure 4, the experimental results clearly demonstrate that the intrinsic dimensionality of the generated output is strictly correlated with the bottleneck dimension $h$. By constraining the network's capacity at the bottleneck, we directly dictate the topological rank of the resulting generated distribution.

\begin{figure}[h]
  \centering
  \includegraphics[width=0.5\textwidth]{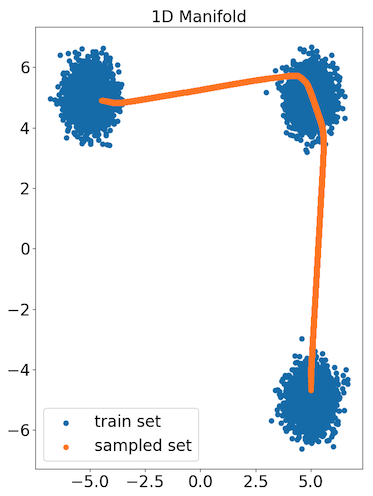}
  \caption{A 3 GMM data set, and the data sampled from $P_{generated}$ where the diffusion process learn by a bottle neck architecture with bottle neck of size 1.}
\end{figure}





\section{Experiments}

In this section, we train diffusion models on the CelebA dataset, employing the various architectures analyzed theoretically in Section 2. It is important to note that the primary objective of these experiments is not to achieve state-of-the-art image quality or to generate perfectly realistic novel images. Rather, our goal is to empirically demonstrate the  diverse ways in which specific architectural decisions dictate the behavior of the model and fundamentally shape the resulting generated distribution.

\subsection{Linear denoiser}
We first evaluated the behavior of the linear denoiser by introducing bottleneck constraints. Specifically, we trained linear models with bottleneck dimensions of $h=10$, $h=100$, and a full-capacity baseline with no bottleneck. As expected, restricting the capacity acts similarly to a dimensionality reduction process. By constraining the network, it is forced to discard the eigenvectors corresponding to the lowest eigenvalues; consequently, the generated images lose fine details and sharp edges. Conversely, the eigenvectors associated with the highest eigenvalues are preserved, ensuring that the broad structural shapes and primary color distributions of the images are maintained.

\begin{figure}[h]
  \centering
  \includegraphics[width=\textwidth]{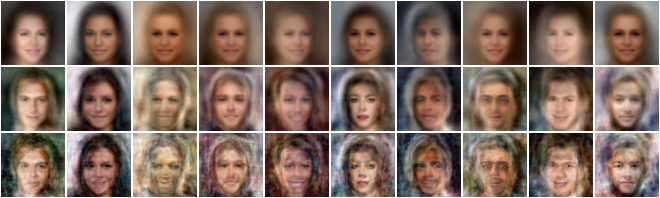}
  \caption{Samples from $P_{generated}$ learn by linear denoiser with bottle neck of size 10 (top row), 100 (second row) and no bottle neck (bottom row). each column have generated with the same starting noise. even after the bias of a linear denoiser, the bottle neck size have big influence over the sampled results.}
\end{figure}

In our second experiment, we empirically validated our theoretical claim from Section 2 regarding distributions with matching moments. We trained an initial linear denoiser on the CelebA dataset and sampled an entirely new dataset from its resulting $P_{generated}$ . Subsequently, we trained a second linear denoiser exclusively on this newly synthesized dataset. We then compared the outputs of both models when conditioned on the exact same initial noise. As demonstrated in our theoretical analysis, the linear denoiser perfectly matches the first two moments of the training data. Therefore, because the newly generated dataset inherently shares the same mean and covariance as the original CelebA dataset, the $P_{generated}$
  of both models are theoretically identical. As illustrated in Figure 6, this results in the two models yielding remarkably similar visual outputs, strongly supporting our theoretical conclusions.

\begin{figure}[h]
  \centering
  \includegraphics[width=\textwidth]{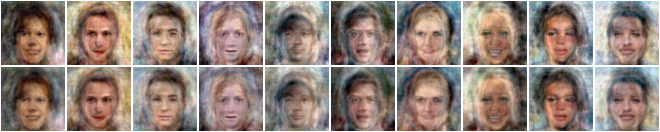}
  \caption{First row: samples from $P_{generated}$ learn by linear denoiser over CelebA. second row: samples from $P_{generated}$ learn by linear denoiser over samlped data set from $P_{generated}$ learn by linear denoiser over CelebA.}
\end{figure}

\subsection{Polynomial denoiser}
To simplify the training process for our polynomial architecture, we utilized a fixed random matrix for the initial linear projection, setting its width to 1024. The training process was then formulated as an optimization problem defined by the equation $A(Ry)^k=x$, where $x$ represents the clean target data, $y$ denotes the noisy input data, $R$ is the fixed random matrix, and $A$ is the learned weight matrix. Because we are minimizing the Mean Squared Error (MSE) with respect to the final matrix $A$, this specific optimization problem yields a closed-form solution. Consequently, this formulation allows the model to efficiently learn high-degree polynomial mappings without the typical instability of deep, non-linear backpropagation.

Using this approach, we trained polynomial denoisers of degrees $k=3$, $k=9$, and $k=27$. Samples generated from these respective distributions are presented in Figure 7. Notably, we observed that the 27-degree polynomial model exhibits a dual behavior: while some of its generated outputs closely resemble strictly memorized samples from the training set, other outputs are entirely novel images. 

\begin{figure}[h]
  \centering
  \includegraphics[width=\textwidth]{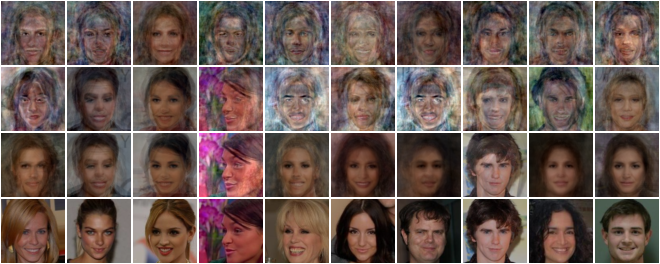}
  \caption{Samples from $P_{generated}$ learn by a polynomial denoiser over CelebA. we used different degree (from up: 3, 9, 27). the last row is the nearest neighbor of the 27-degree polynomial result. When using high degree, the polynomial denoiser create both creativity images and memorized images from train set. }
\end{figure}

Furthermore, we applied this methodology at the patch level to isolate and evaluate the specific influence of locality on the generative process. To achieve this, we followed~\cite{kamb2025an} and implemented a dynamic patch-size strategy, transitioning from larger patches in the high-noise regime to progressively smaller patches as the noise level decreased. Correspondingly, the dimensions of the random matrix were adjusted dynamically to align with these varying scales.

As illustrated in Figure 8, the results reveal that while the introduction of locality introduces certain visual nuances absent in the global configurations, it remains insufficient for synthesizing high-quality, novel images. 

\begin{figure}[h]
  \centering
  \includegraphics[width=\textwidth]{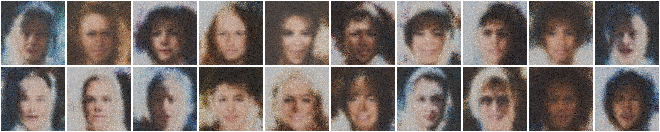}
  \caption{Samples from $P_{generated}$ learn by a patch polynomial denoiser over CelebA.}
\end{figure}

\subsection{Bottleneck}
As previously discussed, the standard Multi-Layer Perceptron (MLP) architecture aligns with our formal definition of a bottleneck network. To empirically test this, we trained three separate MLP networks with varying hidden layer capacities: $h=1000$, $h=500$, and $h=100$. Given that our experimental dataset was deliberately small (consisting of exactly 1000 samples), the network with $h=1000$ possessed a dimensional capacity perfectly matching the dataset size. Consequently, and rather unsurprisingly, this configuration led the model to strictly memorize and reproduce the training samples.

Conversely, as we restricted the capacity in the networks with $h=500$ and $h=100$, the models avoided strict memorization and instead exhibited distinct forms of generative creativity. As illustrated in Figure 9, employing progressively smaller hidden dimensions forces the generated data to lie on a lower-rank manifold, which inherently causes a proportional loss of specific, granular knowledge from the original dataset. Ultimately, this experiment demonstrates that simply utilizing a global network architecture is insufficient to guarantee memorization. Rather, the dimensional constraints of the bottleneck inherently dictate the tradeoff between memorizing the training data and generating novel, creative samples.

\begin{figure}[h]
  \centering
  \includegraphics[width=\textwidth]{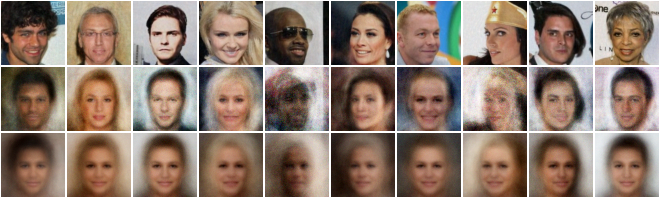}
  \caption{Samples from $P_{generated}$ learn by a bottle neck architecture with h=1000 (up), h=500 (middle) and h=100 (down). as expected, the top row is exact sample from train set. }
\end{figure}

\section{Related work}
Recent literature has extensively explored the internal mechanisms that enable diffusion models to produce high-quality images. Notably, Kadkhodaie et al. ~\cite{kadkhodaie2025unconditionalcnndenoiserscontain, kadkhodaie2024generalization} investigate the specific properties of U-Net and CNN denoisers, demonstrating that these architectures achieve strong generalization and synthesize realistic images by leveraging geometry-adaptive harmonic representations and sparse semantic features. While their work provides valuable intuition into how specific U-Net properties facilitate "good" image generation, our research shifts the focus toward a broader, mathematical formalization of the generative process itself. Rather than evaluating image realism through the lens of a single architecture, we mathematically characterize the direct connection between several distinct architectural classes (including linear, polynomial, and bottleneck models) and their resulting generated distributions. By doing so, we provide a theoretical framework that proves how overarching structural choices inherently enforce specific probability distributions, regardless of whether the resulting images align with human perception of realism.

A prominent line of research posits that the "creativity" and generalization capabilities of diffusion models stem primarily from their local network properties. Specifically, previous works—such as those by Ganguli et al.~\cite{kamb2025an} and Niedoba et al.~\cite{niedoba2025towards}—suggest that restricted receptive fields and local patch-based processing prevent models from memorizing global training structures, thereby forcing them to generate novel, cohesive compositions. While these theories highlight the importance of local features in synthesizing realistic images, our work demonstrates that locality alone is insufficient to explain or guarantee generative diversity. As shown in our empirical analysis (e.g., Figure 1), denoiser architectures possessing the exact same receptive field can yield drastically different generative outcomes, ranging from strict memorization to the production of completely novel distributions. 

Other recent investigations by Zeno et al. ~\cite{zeno2025when} have turned to simplified, shallow networks to probe the boundary between memorization and generalization in diffusion models. These studies share our basic approach of analyzing tractable, elementary models, but we focus on more general architectures.

\section{Conclusion and Limitations}
In this work, we highlight the impact of architectural inductive biases on the generated distribution. We demonstrate that, under equivalent settings and on the same dataset, both global and local architectures are capable of exhibiting true generative creativity as well as training set memorization. Furthermore, we provide analytical results that formalize the relationship between a model’s architectural design, its hyperparameter selection, and the behavior of the resulting distribution.

Our study entails two primary limitations. First, our mathematical analysis is constrained to relatively simple theoretical models. State-of-the-art generative networks, such as U-Net architectures, are inherently more complex; empirical evidence indicates that nuanced design details dictate the quality of their generated outputs. Second, our empirical evaluations were conducted on a relatively small dataset. While this deliberate choice allowed us to isolate and observe specific phenomena—such as the exact data replication detailed in Section 3.3—it inherently restricted our ability to showcase high-fidelity novel image generation, an objective that typically requires significantly larger training sets.

\newpage

\small

\small

\medskip


\appendix

\section{Theoretical proofs}

\subsection{Proof theorem 4}
\label{proof4}
Let $x \in \mathbb{R}^d$ be a signal vector with covariance $\Sigma_x = \mathbb{E}[xx^T]$. Let $y = x + \epsilon$ be the observed noisy signal, where $\epsilon \sim \mathcal{N}(0, \sigma^2 I)$ is Gaussian noise independent of $x$.

We consider a linear neural network with a bottleneck of dimension $k < d$. The network is represented by the weight matrices $W_1 \in \mathbb{R}^{k \times d}$ and $W_2 \in \mathbb{R}^{d \times k}$. The goal is to minimize the Mean Squared Error (MSE):

\begin{equation}
J(W_1, W_2) = \mathbb{E}[\|x - W_2 W_1 y\|^2]
\end{equation}

Let $L = W_2 W_1$ be the rank-$k$ linear operator. The objective is:

\begin{equation}
J(L) = \mathbb{E}[\|x - Ly\|^2] = \mathrm{tr}\left( \Sigma_x - 2L\Sigma_x + L(\Sigma_x + \sigma^2 I)L^T \right)
\end{equation}

Using the independence of $x$ and $\epsilon$:

\begin{itemize}
\item $\mathbb{E}[yx^T] = \mathbb{E}[(x + \epsilon)x^T] = \Sigma_x$
\item $\mathbb{E}[yy^T] = \mathbb{E}[(x + \epsilon)(x + \epsilon)^T] = \Sigma_x + \sigma^2 I = \tilde{\Sigma}$
\end{itemize}

To solve for the rank-$k$ constraint, we rewrite $J(L)$ by completing the square. Note that:

\begin{equation}
\|M - L \tilde{\Sigma}^{1/2}\|_F^2 = \mathrm{tr}(MM^T) - 2 \mathrm{tr}(L \tilde{\Sigma}^{1/2} M^T) + \mathrm{tr}(L \tilde{\Sigma} L^T)
\end{equation}

By setting $M = \Sigma_x \tilde{\Sigma}^{-1/2}$, the second term becomes $2 \mathrm{tr}(L \Sigma_x)$, matching our objective function.

Thus, the minimization is equivalent to:

\begin{equation}
\min_{\mathrm{rank}(L) \le k} \| \Sigma_x \tilde{\Sigma}^{-1/2} - L \tilde{\Sigma}^{1/2} \|_F^2
\end{equation}

Let the target matrix be $M = \Sigma_x \tilde{\Sigma}^{-1/2}$. Since $\Sigma_x$ and $\tilde{\Sigma}$ share the same eigenvectors $U = [u_1, \dots, u_d]$, the SVD of $M$ is:

\begin{equation}
M = \sum_{i=1}^d \frac{\lambda_i}{\sqrt{\lambda_i + \sigma^2}} u_i u_i^T
\end{equation}

According to the Eckart-Young-Mirsky theorem, the best rank-$k$ approximation of $M$ is the truncated sum:

\begin{equation}
(L \tilde{\Sigma}^{1/2})^* = \sum_{i=1}^k \frac{\lambda_i}{\sqrt{\lambda_i + \sigma^2}} u_i u_i^T
\end{equation}

To isolate $L$, we multiply by $\tilde{\Sigma}^{-1/2}$ on the right. Note that:

\[
\tilde{\Sigma}^{-1/2} = \sum_{j=1}^d \frac{1}{\sqrt{\lambda_j + \sigma^2}} u_j u_j^T
\]

Since the eigenvectors are orthonormal ($u_i^T u_j = \delta_{ij}$):

\begin{equation}
L^* = \left( \sum_{i=1}^k \frac{\lambda_i}{\sqrt{\lambda_i + \sigma^2}} u_i u_i^T \right)
\left( \sum_{j=1}^d \frac{1}{\sqrt{\lambda_j + \sigma^2}} u_j u_j^T \right)
= \sum_{i=1}^k \frac{\lambda_i}{\lambda_i + \sigma^2} u_i u_i^T
\end{equation}
Which is the k-approximation of the data's covariance.

\subsection{proof  theorem 5}
\label{proof5}
We now prove the second part of our claim. To help our proof, we'll look at a different form of the polynomial denoiser with specific rank k: $f(y) = Ah(y) + b$ where $h(y)$ is a monomial of y up to rank k, A is a matrix and b is a vector that minimize the MSE loss.
the optimal solution for a problem in this from is: $A=\Sigma_{x,h}\Sigma_h^{-1}$ and $b=\mu_x-A\mu_h$ with $\mu_x = E[X]$, $\mu_h = E[h(Y)]$, $\Sigma_{x,h}=Cov(X,h(Y))$ and $\Sigma_h=Cov(h(Y),h(Y))$. from definition is holds that:
\begin{equation}
    (\Sigma_h)_{i,j} = \Sigma_{(i_1,..,i_k)}\Sigma_{(j_1,..,j_k)}E[\underbrace{\prod_{l=1}^k(x_l+\eta_{l})^{i_l}}_{\text{the i'th monom of h(y)}}\prod_{l=1}^k(x_l+\eta_{l})^{j_l}]
\end{equation}
where $x_l$ is the index $l$ of the high-dimensional variable $X$, $\Sigma_{l=1}^ki_l = i$ and $\Sigma_{l=1}^kj_l = j$. with similar analysis we know that:
\begin{equation}
    (\Sigma_{x,h})_{i,j} = \Sigma_{(j_1,..,j_k)}E[(x_i+\eta_i)\prod_{l=1}^k(x_l+\eta_{l})^{j_l}]
\end{equation}
Due to the fact that the noise is independent with $X$ and with a little arithmetic computation we get that each entry of $\Sigma_h$ and $\Sigma_{x,h}$ is linear combination of elements in the form $E[\prod x^j_i]$ which is some moment of $X$. note that the rank of the highest moment in those matrices is $2k$.

\section{Experiments details}
All the experiments in this work made on the same data set CelebA with size=1000. The linear models (with or without bottle neck) sampled using 10 steps of the DDIM algorithm. the other models sampled using 50 steps. we used cosine scheduler for all of our models.
All of the neural networks we used train over 500 epochs with lr=1e-4 and a simple Adam optimizer. 
We used A100 Typed GPU with one worker, 22GB, and all experiments took between secods to several hours (~12 top).

All the used architectures are available here: \url{https://github.com/ItamarLevine/ArchitectureAndCreativity}

\end{document}